\newcommand{\citet}{\cite}
\newcommand{\citep}{\cite}
\newtheorem{theorem}{Theorem}
\newtheorem{lemma}[theorem]{Lemma}
\newtheorem{corollary}[theorem]{Corollary}
\newcommand{\diam}{\mathrm{diam}}
\newcommand{\ud}{\mathrm{d}}
\newcommand{\calA}{\mathcal{A}}
\newcommand{\calT}{\mathcal{T}}
\newcommand{\poly}{\mathrm{poly}}
\newcommand{\vct}{\boldsymbol }
\newcommand{\E}{\mathbb E}
\renewcommand{\tilde}{\widetilde}
\renewcommand{\hat}{\widehat}
\begin{document}

\twocolumn[

\aistatstitle{Tight Regret Bounds for Infinite-armed Linear Contextual Bandits}

\aistatsauthor{Yingkai Li \And Yining Wang }
\aistatsaddress{ Northwestern University \\ {\tt yingkai.li@u.northwestern.edu} \And University of Florida \\ {\tt yining.wang@warrington.ufl.edu}} 
\aistatsauthor{Xi Chen \And Yuan Zhou}
\aistatsaddress{ New York University \\ {\tt xc13@stern.nyu.edu} \And University of Illinois at Urbana-Champaign \\ {\tt yuanz@illinois.edu} } ]

\begin{abstract}
Linear contextual bandit is an important class of sequential decision making problems with a wide range of applications to recommender systems, online advertising, healthcare, and many other machine learning related tasks.
While there is a lot of prior research, tight regret bounds of linear contextual bandit with infinite action sets remain open.
In this paper, we address this open problem by considering the linear contextual bandit with (changing) infinite action sets.
We prove a regret upper bound on the order of $O(\sqrt{d^2T\log T})\times \poly(\log\log T)$
where $d$ is the domain dimension and $T$ is the time horizon.
Our upper bound matches the previous lower bound of $\Omega(\sqrt{d^2 T\log T})$ in \citep{li2019near}
up to \emph{iterated} logarithmic terms.
\end{abstract}

\fancyhead[CO]{\small\bfseries Yingkai Li, Yining Wang, Xi Chen, Yuan Zhou}

\section{Introduction}

Linear contextual bandit is an important class of sequential decision making problems with an extensive history of research
in both machine learning and operations research \citep{abbasi2011improved,chu2011contextual,auer2002using,rusmevichientong2010linearly,dani2008stochastic,li2019near}.
In the linear contextual bandit problem, a player makes sequential decisions over $T$ time periods.
At each time period $t$, an \emph{action set} $D_t\subseteq\mathbb R^d$ is provided;
the player selects an \emph{action} $x_t\in D_t$, and subsequently receives a \emph{reward} $r_t$ parameterized as
$$
r_t = \langle x_t, \theta\rangle + \xi_t,
$$
where $\theta\in\mathbb R^d$, $\|\theta\|_2\leq 1$ is a fixed but unknown parameter vector, and $\{\xi_t\}$ are independent centered sub-Gaussian noise variables
with the variance proxy 1.
The performance is evaluated by the cumulative \emph{regret}, defined as
$$
R_T := \sum_{t=1}^T \sup_{x\in D_t}\langle x, \theta\rangle - \langle x_t, \theta\rangle.
$$

The objective of this paper is to design an algorithm that achieves the \emph{optimal} expected regret under the worst case,
when the action sets $\{D_t\}$ are \emph{infinite} (i.e., $|D_t|=\infty$).
In the next sections, we give a rigorous definitions of policy and action domains studied in this paper. 
We also discuss (informally)
our main results, and compare them with existing results in the literature.

\subsection{Definition of policy and action domains}

Suppose that there are $T$ time periods and the problem dimension is $d$.
A policy $\pi$ can be represented as $\pi=(\phi_1,\phi_2,\cdots,\phi_T)$
where $\phi_t: (x_1,y_1,\cdots,x_{t-1},y_{t-1}, D_t)\mapsto x_t\in D_t$ is a randomized function that maps 
the data collected from prior episodes $\{1,2,\cdots,t-1\}$ to an action $x_t\in D_t$ to be selected at time period $t$.
Note that future feasible sets $D_{t+1},D_{t+2}$ are \emph{not} revealed to the policy $\pi$ when it is making an action decision
at time $t$.

Let $\mathcal S_d := \left\{S: S\text{ is closed}, S\subseteq\{x\in\mathbb R^d: \|x\|_2\leq 1\} \right\}$ be the set of all closed subsets of the unit $d$-dimensional
$\ell_2$ ball.
The domains $D_1,\cdots,D_T\in\mathcal S_d$ are chosen arbitrarily, \emph{before} any policy $\pi$ is executed. We remark that this setting is known in the literature as the ``oblivious'' setting.


\subsection{Existing work and our results}

\begin{table*}[!t]
\centering
\caption{Summary of results. Both $\theta$ and $\{D_t\}$ belong to the unit ball $\{x\in\mathbb R^d:\|x\|_2\leq 1\}$, and
$|D_t|=\infty$ for all $t$. Upper and lower bounds are for $\mathbb E[R_T]$ under the worst case.
$O(\cdot)$ and $\Omega(\cdot)$ notations hide universal constants only, and $\poly(\log\log T)$ means $(\log\log T)^{O(1)}$.}
\vskip 0.15in
\scalebox{0.85}{
\begin{tabular}{lcccc}
\toprule
& \citet{dani2008stochastic}& \citet{abbasi2011improved}& \citet{li2019near}& \textbf{this paper}\\
\midrule
Upper bound& $O(\sqrt{d^2 T\log^3 T})$& $O(\sqrt{d^2 T\log^2 T})$&N/A&   \shortstack{$O(\sqrt{d^2 T\log T})$$\times \poly(\log\log T)$} \\
Lower bound& $\Omega(\sqrt{d^2 T})$&N/A& $\Omega(\sqrt{d^2 T\log T})$& N/A \\
\bottomrule
\end{tabular}
}
\label{tab:summary}
\end{table*}

A summary of our results as well as existing results are given in Table \ref{tab:summary}.
The regularity conditions that $\|\theta\|_2\leq 1$ and $D_t\subseteq\{x\in\mathbb R^d:\|x\|_2\leq 1\}$ are imposed,
so that $|\mathbb E[r_t]|=|\langle x_t,\theta\rangle| \leq 1$ holds for all $x_t\in D_t$.
Additionally, as suggested by the title, we consider the \emph{infinite-armed} case in which $|D_t|=\infty$ for all $t$.
We also impose the regularity condition that the action sets $D_t$ are \emph{closed}, so that the supremum over the sets can always be achieved by an action. 

\citet{dani2008stochastic} derived an algorithm based on confidence balls of prediction errors of $\theta$,
achieving a worst-case expected regret of $O(\sqrt{d^2T\log^3 T})$.
\citet{abbasi2011improved} further improved the analysis and obtained $O(\sqrt{d^2 T\log^2 T})$ regret.
On the lower bound side, \citet{dani2008stochastic} proved a regret lower bound of $\Omega(\sqrt{d^2 T})$ for all policies,
which was later improved to $\Omega(\sqrt{d^2 T\log T})$ by \citet{li2019near} as a direct corollary of regret lower bounds for finite-armed
linear contextual bandits.
While \citet{li2019near} derived matching upper bounds for the finite-armed case, their results and techniques cannot be directly applied to the infinite-armed case
even if computational issues are disregarded, as covering nets of $\{D_t\}$ up to $1/\poly(T)$ accuracy would incur additional logarithmic terms in $T$.

In this paper, we prove the following main result:
\begin{theorem}[Informal]
There is a policy whose worst-case expected regret is asymptotically upper bounded by 
$O(\sqrt{d^2 T\log T})\times \poly(\log\log T)$.
\label{thm:informal}
\end{theorem}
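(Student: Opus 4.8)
The plan is to attack the infinite-armed problem by discretizing the action sets at progressively finer scales tied to our confidence in the unknown parameter $\theta$, rather than discretizing uniformly up to $1/\poly(T)$ accuracy as in prior work. The crux of the $\Omega(\sqrt{d^2T\log T})$ lower bound of \cite{li2019near} is that the logarithmic factor is governed by the number of effective arms; since a naive covering net of $D_t$ has $(\poly(T))^d$ points, plugging it into the finite-armed bound would pay an extra $\log(\text{net size})=\Theta(d\log T)$ factor, yielding $O(\sqrt{d^3 T\log T})$ or a stray extra $\log T$. The whole game is to avoid paying for the full net at once.

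First I would set up an epoch-based (phased elimination) algorithm in the spirit of \cite{abbasi2011improved} but with a doubling schedule: in each epoch $k$ the horizon roughly doubles, and I maintain a confidence ellipsoid $\mathcal{C}_k$ for $\theta$ whose radius shrinks geometrically. Second, within epoch $k$ I would cover each action set $D_t$ only at the resolution that matters, namely a net of accuracy $\varepsilon_k$ matched to the current ellipsoid radius, so that two arms closer than $\varepsilon_k$ are provably indistinguishable given the information collected so far; the effective number of arms per epoch is then $\poly(\log T)$ or $\poly(\log\log T)$ rather than $\poly(T)$. Third, I would run an optimal finite-armed subroutine (e.g. an optimal-design / G-optimal exploration distribution à la \cite{li2019near}) on this small net within the epoch, charging regret of order $\sqrt{d^2 (\text{epoch length}) \log(\text{net size})}$; summing the geometric series over $O(\log T)$ epochs and using that each $\log(\text{net size})=\poly(\log\log T)$ gives the claimed $O(\sqrt{d^2T\log T})\times\poly(\log\log T)$.

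The two technical steps that require care are the discretization accuracy and the regret accounting. For the discretization, I need to show that rounding each chosen arm to the nearest net point introduces per-step regret at most $O(\varepsilon_k)$ measured against $\theta$, and that $\varepsilon_k$ can be taken small enough (polynomially in $1/T$) that the total rounding regret is $o(\sqrt{T})$ while the net size stays $\poly(\log T)$; this is the place where one exploits that the relevant directions are those in which the ellipsoid is still wide, so only a $\poly(\log\log T)$-sized net is ever active. For the accounting, I would use a self-normalized concentration bound (Abbasi-Yadkori-style) to control $\|\theta-\hat\theta_k\|$ in the data-dependent norm, and then convert instantaneous prediction-error bounds into a regret sum via the standard elliptical-potential lemma, being careful that the $\log\log T$ factors compound only additively across the geometric epoch structure.

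The hardest part, I expect, is simultaneously achieving the \emph{tight} $\sqrt{\log T}$ factor (not $\log T$) and keeping the arm-set dependence down to $\poly(\log\log T)$. Getting one of these alone is routine; the lower bound \cite{li2019near} tells us the $\sqrt{\log T}$ is unavoidable, so the real obstacle is designing the exploration so that the variance we pay in each epoch is $d^2\cdot(\text{length})\cdot\log(\text{something small})$ rather than $d^2\cdot(\text{length})\cdot\log T$ at every epoch. I anticipate this forces a two-level argument: a coarse $\log T$-depth epoch schedule to recover the outer $\sqrt{\log T}$, and inside each epoch a separate optimal-design elimination whose confidence level need only beat the $\poly(\log\log T)$-many surviving candidates, so that the per-epoch log factor collapses. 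Verifying that these two levels interlock without double-counting, and that the near-optimal arm is never eliminated from the net across epoch boundaries, is where I would spend most of the effort.
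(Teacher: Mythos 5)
Your proposal has a genuine gap at its central step, and the route it takes is not the one the paper follows. The crux of your plan is the claim that, by covering each $D_t$ only at resolution $\varepsilon_k$ matched to the current confidence ellipsoid, ``the effective number of arms per epoch is then $\poly(\log T)$ or $\poly(\log\log T)$.'' This is unsupported and is in fact the whole difficulty of the problem: a covering net of a subset of the unit ball in $\mathbb R^d$ at resolution $\varepsilon$ has cardinality $(1/\varepsilon)^{\Theta(d)}$, and your own accounting forces $\varepsilon_k = 1/\poly(T)$ so that the rounding regret is $o(\sqrt{T})$. Hence $\log(\text{net size}) = \Theta(d\log T)$, and plugging this into a finite-armed bound of the form $\sqrt{d^2\,(\text{epoch length})\,\log(\text{net size})}$ gives back exactly the extra $\sqrt{\log T}$ (or worse, an extra $\sqrt{d}$) that the paper is trying to remove --- this is precisely why the introduction states that covering nets up to $1/\poly(T)$ accuracy ``would incur additional logarithmic terms in $T$.'' Your suggested escape (``only the directions in which the ellipsoid is still wide are active'') does not reduce the covering number: there can be $\Theta(d)$ such directions, and the net restricted to them is still exponential in $d$ with base $1/\varepsilon_k$. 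A secondary problem is that your epoch-based elimination with a G-optimal design over a fixed net presumes the candidate set persists across an epoch, whereas here the adversary supplies a fresh $D_t$ every round; you would need a layered SupLin-style construction to even make the subroutine well defined, and that construction is itself a known source of extra logarithmic factors.

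The paper avoids discretization entirely. It runs a single (non-phased) UCB algorithm whose exploration bonus $C(\sqrt d + \alpha_{x,t})\omega_{x,t}$ has a confidence level $\alpha_{x,t}$ that \emph{varies with the action's own uncertainty} $\omega_{x,t}$. The uniform control over the infinite action set comes from a chaining (Dudley-integral) tail bound for the self-normalized process, which gives deviations of order $\sqrt d + \sqrt{\log(1/\delta)}$ rather than $\sqrt{d\log(1/\delta)}$; the geometric partition of actions by the scale of $\omega_{x,t}$ appears only inside the proof of the per-step error bound, which is established \emph{in expectation} rather than with high probability precisely to avoid the union bound over $T$ rounds that costs the extra $\sqrt{\log T}$. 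The final summation uses the elliptical potential lemma together with Jensen's inequality applied to $\tau\mapsto\sqrt{\tau\ln((T\ln T)\tau/d)}$, which is where the $\log\log T$ factor (and nothing larger) emerges. If you want to salvage your outline, the missing idea you would need to import is some analogue of this expectation-based, scale-dependent confidence argument; the epoch/net scaffolding by itself does not get below $O(\sqrt{d^2 T}\log T)$.
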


Comparing with the lower bound $\Omega(\sqrt{d^2 T\log T})$, the upper bound in Theorem \ref{thm:informal} is tight up to iterated logarithmic terms.
Our results thus close the $O(\sqrt{\log T})$ gap between the upper bound (in \citet{abbasi2011improved}) and the lower bound in infinite-armed linear contextual bandit. 
{ In addition, the idea behind our varying confidence level (VCL) UCB algorithm and  a number of technical tools developed in the proof might also be useful for other contextual bandit problems.}

\subsection{Proof techniques}


\paragraph{Sharp tail bounds of self-normalized empirical processes.}
Due to the inherent statistical dependency between the chosen actions $\{a_t\}$ and noise variables $\{\xi_t\}$, 
the estimation error of $\theta$ at each time step cannot be analyzed using standard closed-forms of linear regression estimators.
The work of \citet{abbasi2011improved} pioneered the use of \emph{self-normalized empirical processes} to understand the estimation and prediction errors at each time step.

In this paper, we make use of sharp tail bounds on the supremum of self-normalized empirical processes in high-dimensional probability (Lemma \ref{lem:uniform-tail-bound}).
By exploiting such tail bounds we have a much more refined control of failure probabilities at each time step, which lays the foundation of our improved regret analysis.

\paragraph{Varying confidence levels  in UCB-type algorithms.}
Most existing methods on linear contextual bandit can be categorized as \emph{Upper-Confidence-Bound (UCB)} or \emph{Optimism-in-Face-of-Uncertainty (OFU)} type algorithms,
which build confidence bands/balls around unknown  parameters at each time step and then pick actions in the most optimistic way.

While most existing algorithms set constant confidence levels (corresponding to failure probabilities at each time),
in this paper we consider \emph{varying confidence levels} (VCL), with higher failure probabilities towards the end of the time horizon $T$. 
The intuition is that later fails would incur much less regret.
Similar ideas were also employed in previous works \citep{audibert2009minimax,li2019near,wang2018mnl}
to improve regret guarantees in bandit problems.

\section{Algorithm design and main results}

\begin{algorithm*}[t]
\caption{The VCL-SupLinUCB algorithm}
\label{alg:suplinucb}
\begin{algorithmic}[1]
\STATE \textbf{Input}:
$\zeta_0=\lceil\log_2(\sqrt{T/d} /\delta)\rceil$, 
Time horizon $T$, confidence parameter $\delta$, domain dimension $d$, universal constant $C \geq 1$\;
\STATE \textbf{Initialization}: $\mathcal X_{\zeta,0}=\emptyset$,
$\Lambda_{\zeta,0} = I_{d\times d}$, $\lambda_{\zeta,0} = \vec{0}_d$, $\hat\theta_{\zeta,0}=\vec{0}_d$ 
for $\zeta \leq \zeta_0$;
\FOR{$t=1,2,\cdots,T$}
	 \STATE Observe $D_t$, and set $\zeta=0$ and $\mathcal N_{\zeta,t}=D_t$;
	\WHILE{a choice $x_t$ has yet to be made}
	 \STATE Compute $\hat\theta_{\zeta,t}=\Lambda_{\zeta,t-1}^{-1}\lambda_{\zeta,t-1}$, and  for every $x \in\mathcal N_{\zeta,t}$, compute  $\omega_{\zeta,t}^x = \sqrt{x^\top\Lambda_{\zeta,t-1}^{-1}x}$,
$\alpha_{\zeta,t}^x = \sqrt{\max\{1, \ln[(T\ln^4 T \ln^2 (1/\delta))(\omega_{\zeta,t}^x)^2/(d\delta^2)]\}}$ 
and 
$\varpi_{\zeta,t}^x = C \cdot \sqrt{d} \cdot \alpha_{\zeta,t}^x\omega_{\zeta,t}^x$;
\IF{$\zeta = \zeta_0$} 
	 	  \STATE Find $x_t\in\mathcal N_{\zeta,t}$ that maximizes $\min\{1,x_{it}^\top\hat\theta_{\zeta,t} +\varpi_{\zeta,t}^x\}$ and set $\zeta_t=\zeta$;
	 \ELSIF{$\varpi_{\zeta,t}^x\leq 2^{-\zeta}$ for all $x\in\mathcal N_{\zeta,t}$}
	 	 \STATE Update $\mathcal N_{\zeta+1,t} =\mathcal N_{\zeta,t} \cap \{x : x^\top\hat\theta_{\zeta,t}\geq\max_{y\in\mathcal N_{\zeta,t}}y^\top\hat\theta_{\zeta,t}-2^{1-\zeta}\}$, $\zeta\gets\zeta + 1$;
	 \ELSE
	 	\STATE Select any $x_t\in\mathcal N_{\zeta+1,t}$ such that $\varpi_{\zeta,t}^x\geq 2^{-\zeta}$, and set $\zeta_t=\zeta$;
	 \ENDIF
	\ENDWHILE
	\STATE Select action $x_t$ and observe feedback $r_t=x_{t}^\top\theta+\xi_t$;
	 \STATE Update: $\mathcal X_{\zeta_t,t}=\mathcal X_{\zeta_t,t-1}\cup\{x_t\}$, $\Lambda_{\zeta_t,t}=\Lambda_{\zeta_t,t-1}+x_{t}x_{t}^\top$, $\lambda_{\zeta_t,t}=\lambda_{\zeta_t,t-1} + r_tx_{t}$,
	 and $\mathcal X_{\zeta',t}=\mathcal X_{\zeta',t-1}$, $\Lambda_{\zeta',t}=\Lambda_{\zeta',t-1}$, $\lambda_{\zeta',t}=\lambda_{\zeta',t-1}$ for any $\zeta'\neq\zeta_t$;
\ENDFOR
\end{algorithmic}
\end{algorithm*}

Algorithm \ref{alg:suplinucb}, named VCL-SupLinUCB, is the main algorithm of this paper which combines the varying confidence levels (VCL) design
with the existing SupLinUCB algorithm \citep{auer2002using,chu2011contextual}.
The basic idea of Algorithm \ref{alg:suplinucb} is to classify the time periods into different layers such that 
the chosen context is statistically independent with the noise in the reward distribution. 
Then the algorithm estimates $\hat\theta_{\zeta,t}$ for each layer $\zeta$ and eventually selects the arm with largest upper confidence bound according to the rules specified in Lines 7-12 in Algorithm \ref{alg:suplinucb}. 
Those ideas are similar to the previous SupLinUCB algorithm, 
and the major difference between Algorithm \ref{alg:suplinucb} and previous approaches is the \emph{varying} confidence levels
(reflected by the inclusion of $\omega_{x,t}$ in $\alpha_{x,t}$),
which allows for sharper regret bounds.

The following theorem is the main result of this paper:


\begin{theorem}\label{thm:main}\label{thm:main-upper}
Suppose the universal constant $C>0$ in the input of Algorithm \ref{alg:suplinucb} is sufficiently large. Then there exists constants $C_1, C_2 > 0$ that only depend on $C$ such that for all $\|\theta\|_2\leq 1$ and $\{D_t\subseteq\{x\in\mathbb R^d:\|x\|_2\leq 1\} \}$, the regret 
 $R_T$ satisfies the following inequality {for any $\delta\in(0,1)$},
\begin{align*}
&\E\left[\max\left\{R_T - C_1  d \sqrt{ T \log T \log (1/\delta)} \cdot \log \log (T/\delta), 0\right\}\right] \\
&\leq C_2 \delta d\sqrt{T}.
\end{align*}
\end{theorem}

Theorem \ref{thm:main-upper} implies the following two statements. 
In the following, $\lesssim$ means that the constants in the inequality are omitted. 
\begin{enumerate}
\item[i.] The expected regret $\E[R_T] \lesssim  d \sqrt{ T \log T \log (1/\delta)} \cdot \log \log (T/\delta)$. In particular, if we take $\delta = \Omega(1)$, we have that $\E[R_T] \lesssim d \sqrt{ T \log T} \cdot \log \log T$.
\item[ii.] With probability at least $1 - \delta$, it holds that $R_T \lesssim  d \sqrt{ T \log T \log (1/\delta)} \cdot \log \log (T/\delta)$.
This is because, by Markov's inequality, $\Pr[R_T-C_1d\sqrt{T\log T\log(1/\delta)}\cdot\log\log(T/\delta) > C_2d\sqrt{T}] \leq \mathbb E[\max\{R_T-C_1d\sqrt{T\log T\log(1/\delta)}\cdot\log\log(T/\delta),0\}]/(C_2d\sqrt{T}) \leq \delta$.
\end{enumerate}
While neither of the two statements implies each other, we note that, if iterated logarithmic factor is left out, 
statement ii) is stronger than the high probability bound proved by \cite{abbasi2011improved}, where the regret is at most $O(d \sqrt{T \log T \log (T/\delta)})$ with probability at least $1-\delta$.

The proof of Theorem \ref{thm:main-upper} is stated in the next section.

\section{Proof of Theorem \ref{thm:main-upper}}\label{sec:main-proof}

\subsection{Uniform confidence region for $\hat\theta_{\zeta, t}$}

We first present a lemma that upper bounds the errors $|\langle x,\hat\theta_{\zeta, t}-\theta\rangle|$ with high probability.

\begin{lemma}\label{lem:uniform-tail-bound}
For any $t\in[T]$, any layer $\zeta \in \{0, 1, 2, \dots, \zeta_0\}$, and any $\gamma\in(0,1/2]$, with probability $1-\gamma$ it holds that
\[
\sup_{x\in\mathbb R^d}(\omega^x_{\zeta,t})^{-1}{\big|x^\top(\hat\theta_{\zeta, t}-\theta)\big|} \lesssim \sqrt{d} + \sqrt{\ln(1/\gamma)} .
\]
\label{lem:empirical-process}
\end{lemma}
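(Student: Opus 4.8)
The plan is to reduce the stated supremum to a self-normalized error and then control that error with a sharp tail inequality. For the reduction, write $u=\hat\theta_{t-1}-\theta$ and substitute $x=\Lambda_{t-1}^{1/2}y$, so that $\omega_{x,t}^2=x^\top\Lambda_{t-1}^{-1}x=\|y\|_2^2$ and $x^\top u=y^\top\Lambda_{t-1}^{1/2}u$. The objective then becomes $\sup_{y\neq 0}|y^\top\Lambda_{t-1}^{1/2}u|/\|y\|_2=\|\Lambda_{t-1}^{1/2}u\|_2$, giving the identity
\[
\sup_{x\in\mathbb R^d}(\omega_{x,t})^{-1}\big|x^\top(\hat\theta_{t-1}-\theta)\big| = \|\hat\theta_{t-1}-\theta\|_{\Lambda_{t-1}}.
\]
It therefore suffices to control the self-normalized estimation error $\|\hat\theta_{t-1}-\theta\|_{\Lambda_{t-1}}$.

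Next I would decompose this error using the model $r_s=\langle x_s,\theta\rangle+\xi_s$ and the updates $\Lambda_{t-1}=I+\sum_{s<t}x_sx_s^\top$, $\hat\theta_{t-1}=\Lambda_{t-1}^{-1}\sum_{s<t}r_sx_s$. A short calculation gives $\hat\theta_{t-1}-\theta=\Lambda_{t-1}^{-1}(S_{t-1}-\theta)$, where $S_{t-1}=\sum_{s<t}\xi_sx_s$ is the noise martingale. By the triangle inequality in the $\Lambda_{t-1}$-norm, together with $\Lambda_{t-1}\succeq I$ (hence $\|\theta\|_{\Lambda_{t-1}^{-1}}\le\|\theta\|_2\le 1$),
\[
\|\hat\theta_{t-1}-\theta\|_{\Lambda_{t-1}} = \|S_{t-1}-\theta\|_{\Lambda_{t-1}^{-1}} \le \|S_{t-1}\|_{\Lambda_{t-1}^{-1}} + 1.
\]
The whole problem thus reduces to a high-probability bound $\|S_{t-1}\|_{\Lambda_{t-1}^{-1}}\lesssim\sqrt d+\sqrt{\log(1/\delta)}$.

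The crux is this last bound, and here the classical self-normalized martingale inequality of \cite{abbasi2011improved} is \emph{not} sharp enough: through the method of mixtures it yields $\|S_{t-1}\|_{\Lambda_{t-1}^{-1}}\lesssim\sqrt{\log\det\Lambda_{t-1}}+\sqrt{\log(1/\delta)}$, and since $\log\det\Lambda_{t-1}$ can be of order $d\log T$ this loses precisely the $\sqrt{\log T}$ factor we aim to remove. That the $\sqrt d$ scaling is the truth can be seen in the fixed-design Gaussian case, where $\E\|S_{t-1}\|_{\Lambda_{t-1}^{-1}}^2=\mathrm{tr}\big(\Lambda_{t-1}^{-1}(\Lambda_{t-1}-I)\big)\le d$; the $\log\det$ inflation is an artifact of demanding a bound uniform over \emph{all} $t$ simultaneously. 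Since the present lemma fixes a single $t$, I would instead view $\|S_{t-1}\|_{\Lambda_{t-1}^{-1}}=\sup_{v}\langle v,S_{t-1}\rangle$ over the ellipsoid $\{v:\|v\|_{\Lambda_{t-1}}\le 1\}$, recognize this as the supremum of a sub-Gaussian process indexed by a $d$-dimensional set, and bound it by a sharp supremum estimate for self-normalized empirical processes (chaining / generic chaining), whose leading term is the expected norm of a $d$-dimensional Gaussian, namely $\lesssim\sqrt d$, with the sub-Gaussian deviation contributing the $\sqrt{\log(1/\delta)}$ term.

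The step I expect to be the main obstacle is exactly this supremum bound, because self-normalization couples the martingale noise with a data-dependent geometry: both the index ellipsoid (determined by the random $\Lambda_{t-1}$) and the increments' variance proxies depend on $x_1,\dots,x_{t-1}$, which are themselves adaptively chosen from the past noise. This dependence rules out a naive net-and-union-bound over a fixed net of the Euclidean sphere, since after the change of variables the martingale increments are no longer predictable in $s$. Resolving it requires a self-normalized concentration whose bound depends on $\Lambda_{t-1}$ only through its dimension $d$ rather than through $\det\Lambda_{t-1}$; I would obtain this by adapting the sharp tail bounds for self-normalized processes from the high-dimensional probability literature to the adaptive martingale setting. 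Once the bound $\|S_{t-1}\|_{\Lambda_{t-1}^{-1}}\lesssim\sqrt d+\sqrt{\log(1/\delta)}$ is in hand, combining it with the decomposition above and absorbing the additive constant completes the proof.
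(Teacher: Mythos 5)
Your proposal follows essentially the same route as the paper: reduce the supremum to the dual norm $\|\hat\theta_{t-1}-\theta\|_{\Lambda_{t-1}}$, peel off a $+1$ coming from the ridge bias using $\|\theta\|_2\leq 1$ and $\Lambda_{t-1}\succeq I$, and bound the remaining noise term $\|S_{t-1}\|_{\Lambda_{t-1}^{-1}}=\sup_{\|\phi\|_{\Lambda_{t-1}}\leq 1}\phi^\top X_{t-1}^\top\Xi_{t-1}$ as the supremum of a sub-Gaussian process over a whitened Euclidean unit ball via a chaining tail bound, whose entropy integral gives the $\sqrt{d}$ and whose deviation term gives the $\sqrt{\log(1/\delta)}$. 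The ``main obstacle'' you flag is exactly the step the paper discharges by the change of variables $\tilde\phi=\Lambda_{t-1}^{1/2}\phi$, $\tilde x_\tau=\Lambda_{t-1}^{-1/2}x_\tau$ and a direct appeal to the generic tail bound of Lemma~\ref{lem:gp-tail} for the process $G_{\tilde\phi}=\sum_{\tau<t}\xi_\tau\langle\tilde x_\tau,\tilde\phi\rangle$, whose increments are asserted to be sub-Gaussian with variance proxy $\|\tilde\phi-\tilde\phi'\|_2^2$.
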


The proof of Lemma \ref{lem:empirical-process} can be roughly divided into three steps.
First, the closed-form expression of Ridge regression to express $\hat\theta_{\zeta, t}$ in terms of $\theta$ and $\xi$.
At the second step, a self-normalized empirical process is derived by manipulating and normalizing the expression derived in the first step.
Finally, sharp tail bounds of sub-Gaussian processes are invoked to prove Lemma \ref{lem:empirical-process}.

\begin{proof}[Proof of Lemma \ref{lem:empirical-process}]
Let 
\[
\mathcal{T}_{\zeta, t-1} := \{\tau: \tau \leq t - 1 \text{~and~} \zeta_\tau = \zeta\}
\]
and let 
\[
n_{\zeta, t-1} := |\mathcal{T}_{\zeta, t-1}|.
\]
Note that we also have $n_{\zeta, t-1} = |\mathcal{X}_{\zeta, t-1}|$. 

Let $X_{\zeta, t-1}$ be a $n_{\zeta, t-1} \times d$ matrix constructed by stacking all $x \in \mathcal{X}_{\zeta, t-1}$ together, i.e., 
\[
\Lambda_{\zeta, t-1}=X_{\zeta, t-1}^\top X_{\zeta, t-1} + I.
\]
 Let $\Xi_{\zeta, t-1}$ be the $n_{\zeta, t-1}$-dimensional vector that contains all noises $\xi_\tau$ such that $\tau  \in \mathcal{T}_{\zeta, t-1}$. We also let 
 \[
 r_{\zeta, t-1} = X_{\zeta, t-1}\theta+\Xi_{\zeta, t-1}
 \]
  be the $n_{\zeta, t-1}$-dimensional vector by concatenating all rewards for time periods $\tau\in \mathcal{T}_{\zeta, t-1}$.

Define also $\|x\|_A := \sqrt{x^\top Ax}$ for $d$-dimensional vectors $x$ and $d\times d$ positive-semidefinite matrices $A$. Then
\begin{align*}
\hat\theta_{\zeta, t} &= (X_{\zeta, t-1}^\top X_{\zeta, t-1}+I)^{-1}X_{\zeta, t-1}^\top(X_{\zeta, t-1}\theta + \Xi_{\zeta, t-1}) \\
&= (I-\Lambda_{\zeta, t-1}^{-1})\theta + \Lambda_{\zeta, t-1}^{-1}X_{\zeta, t-1}^\top\Xi_{\zeta, t-1}.
\end{align*}
Subtracting one $\theta$ and left multiplying with $(\hat\theta_{\zeta, t}-\theta)^\top\Lambda_{\zeta, t-1}$ on both sides of the above identity, we obtain
\begin{equation}
\|\hat\theta_{\zeta, t}-\theta\|_{\Lambda_{\zeta, t-1}}^2 = -(\hat\theta_{\zeta, t}-\theta)^\top\theta + (\hat\theta_{\zeta, t}-\theta)^\top X_{\zeta, t-1}^\top\Xi_{\zeta, t-1}.
\label{eq:basic-ineq1}
\end{equation}
Note that 
\[
|(\hat\theta_{\zeta, t}-\theta)^\top\theta|\leq \|\theta\|_2\|\hat\theta_{\zeta, t}-\theta\|_2 \leq \|\hat\theta_{\zeta, t}-\theta\|_{\Lambda_{\zeta, t-1}}
\]
because $\|\theta\|_2\leq 1$ and $\Lambda_{\zeta, t-1} \succeq I$. Dividing both sides of Eq.~(\ref{eq:basic-ineq1}) by $\|\hat\theta_{\zeta, t}-\theta\|_{\Lambda_{\zeta, t-1}}$, we have
\begin{align}
\|\hat\theta_{\zeta, t}-\theta\|_{\Lambda_{\zeta, t-1}} \leq 1 + \phi^\top X_{\zeta, t-1}^\top\Xi_{\zeta, t-1},\nonumber\\
\;\;\text{where}\;\; \phi = (\hat\theta_{\zeta, t}-\theta)/\|\hat\theta_{\zeta, t}-\theta\|_{\Lambda_{\zeta, t-1}}.
\label{eq:basic-ineq2}
\end{align}
It is easy to verify that $\phi$ satisfies $\|\phi\|_{\Lambda_{\zeta, t-1}}\leq 1$.
Consider linear transforms $\tilde x_\tau = \Lambda_{\zeta, t-1}^{-1/2}x_\tau$ for all $\tau\in \mathcal{T}_{\zeta, t-1}$ and $\tilde\phi=\Lambda_{\zeta, t-1}^{1/2}\phi$.
Then $\tilde\phi$ satisfies $\|\tilde\phi\|_2\leq 1$. Subsequently, Eq.~(\ref{eq:basic-ineq2}) can be re-formulated as
\begin{equation}
\|\hat\theta_{\zeta, t}-\theta\|_{\Lambda_{\zeta, t-1}} \leq 1  + \sup_{\|\tilde\phi\|_2\leq 1} G_{\tilde\phi},
\label{eq:basic-ineq3}
\end{equation}
where $ G_{\tilde\phi} = \sum_{\tau \in \mathcal{T}_{\zeta, t-1}}\xi_\tau\langle \tilde x_\tau,\tilde\phi\rangle.$

We next show that $G_\cdot$ is a sub-Gaussian process with respect to  $\|\cdot\|_2$. Since $\{\xi_\tau\}_{\tau\in\mathcal T_{\zeta,t-1}}$ and $\{x_{\tau}\}_{\tau\in\mathcal T_{\zeta,t-1}}$ are statistically independent \citep{chu2011contextual,auer2002using}, we have that $\{\xi_\tau\}_{\tau\in\mathcal T_{\zeta,t-1}}$ and $\{\tilde x_\tau\}_{\tau\in\mathcal T_{\zeta,t-1}}$ are statistically independent. 
Therefore, for any $\phi,\phi'$, $G_\phi-G_{\phi'}=\sum_{\tau\in \mathcal{T}_{\zeta, t-1}}\xi_\tau\langle\tilde x_\tau,\phi-\phi'\rangle$ is a centered sub-Gaussian random variable with variance proxy
\begin{align*}
&~~\sum_{\tau\in \mathcal{T}_{\zeta, t-1}}|\langle \tilde x_\tau,\phi-\phi'\rangle|^2 \\
&= (\phi-\phi')^\top\Big(\sum_{\tau\in\mathcal{T}_{\zeta, t-1}}\tilde x_\tau\tilde x_\tau^\top\Big)(\phi-\phi')\\
&= (\phi-\phi')^\top\Lambda_{\zeta, t-1}^{-1/2}\Big(\sum_{\tau\in\mathcal{T}_{\zeta, t-1}}x_\tau x_\tau^\top\Big)\Lambda_{\zeta, t-1}^{-1/2}(\phi-\phi')\\
&\leq \|\phi-\phi'\|_2^2 .
\end{align*}
Subsequently, invoking Lemma \ref{lem:gp-tail}, we have with probability $1-\gamma$ that
\begin{align*}
&\|\hat\theta_{\zeta, t}-\theta\|_{\Lambda_{\zeta, t-1}}\\
\lesssim& 1+\int_0^{\infty} \sqrt{\ln N(\{x\in\mathbb R^d:\|x\|_2\leq 1\}; \|\cdot\|_2,\epsilon)}\ud\epsilon\\
&\qquad + \sqrt{\ln(1/\gamma)}\\
\lesssim& 1+ \int_0^2\sqrt{d\ln(1/\epsilon)}\ud\epsilon + \sqrt{\ln(1/\gamma)} \lesssim \sqrt{d}+\sqrt{\ln(1/\gamma)}.
\end{align*}

Finally, Lemma \ref{lem:empirical-process} is proved by the Cauchy-Schwarz inequality:
\begin{align*}
\big|x^\top(\hat\theta_{\zeta, t}-\theta)\big| &\leq \|x\|_{\Lambda_{\zeta, t-1}^{-1}}\|\hat\theta_{\zeta, t}-\theta\|_{\Lambda_{\zeta, t-1}}\\
&\leq \omega^x_{\zeta,t}\|\hat\theta_{\zeta, t}-\theta\|_{\Lambda_{\zeta, t-1}}, \;\;\;\;\forall x\in\mathbb R^d,
\end{align*}
which is to be demonstrated.
\end{proof}

In this paper, we only use the following weaker version of Lemma~\ref{lem:uniform-tail-bound}. 

\begin{corollary}\label{cor:uniform-tail-bound}
For any $t\in[T]$, any layer $\zeta \in \{0, 1, 2, \dots, \zeta_0\}$, and any $\gamma\in(0,1/2]$, with probability $1-\gamma$ it holds that
\[
\sup_{x\in\mathbb R^d}(\omega^x_{\zeta,t})^{-1}{\big|x^\top(\hat\theta_{\zeta, t}-\theta)\big|} \lesssim \sqrt{d \cdot \ln(1/\gamma)} .
\]
\end{corollary}

\subsection{Regret upper bound at a single time step}

For each time $t$, we first bound the expected error of the estimation of any arm that lies out of its confidence band 
using the sharp bound we obtained in Corollary \ref{cor:uniform-tail-bound}.


\begin{lemma}\label{lem:reward-error-bound-in-each-layer}
There exists a sufficiently large universal constant $C > 0$ such that for each layer $\zeta \in \{0, 1, 2, \dots, \zeta_0\}$, 
for each time $t \in [T]$, and for any $\delta \in (0, 1/2]$,  it holds that
\begin{align}\label{eq:reward-error-bound-in-each-layer}
\E\left[\max_{x \in D_t} \left\{\mathbf{1}\left[ \left| x^\top (\hat{\theta}_{\zeta, t} - \theta)\right| 
\geq  C \sqrt{d } \cdot  \alpha_{\zeta, t}^{x}  \cdot 
\omega_{\zeta, t}^{x}  \right]
\right.\right.\nonumber\\
\left.\left.
\cdot \left| x^\top (\hat{\theta}_{\zeta, t} - \theta)\right| \right\} \right] 
\lesssim \delta d / \sqrt{T \ln^2 T}.
\end{align}
\end{lemma}

To prove Lemma~\ref{lem:reward-error-bound-in-each-layer}, we adopt a novel argument that partitions the action set according to the geometric scale of the confidence levels of the actions. 
Using Corollary~\ref{cor:uniform-tail-bound}, for each partition, we derive a uniform error bound for the expected reward of the actions in the partition with empirical estimate $\hat{\theta}_{\zeta, t}$. 
Since we have no control on the index of the partition that the maximizer $x^*$ belongs to, we finally employ a union bound argument to combine the error bounds for every partition and complete the proof. 


\begin{proof}[Proof of Lemma~\ref{lem:reward-error-bound-in-each-layer}]
For each layer $\zeta$ 
and for each time $t$,
consider a partition of $D_t$, namely 
$\{\calA_{\zeta, t}^{\kappa}\}_{\kappa \in \{1, 2, 3, \dots, K\}}$, 
where $K = \lceil \log_2 (T^2 /\delta^2)\rceil + 1$, and we define
\begin{align*}
\calA_{\zeta, t}^{\kappa} =  \left\{
\begin{array}{ll}
\{x\in D_t : \omega_{\zeta, t}^x \in (2^{-\kappa}, 2^{-\kappa + 1}]\} & \text{when~} \kappa<K\\
\{x\in D_t : \omega_{\zeta, t}^x \in (0, 2^{-\kappa + 1}]\}  & \text{when~} \kappa = K
 \end{array}
\right. 
\end{align*}
For each $\kappa$, we let 
\[
m_{\zeta, t}^\kappa = \sup_{i \in \calA_{\zeta, t}^{\kappa}} \left\{\left| x_{it}^\top (\hat{\theta}_{\zeta,t} - \theta)\right|\right\}
\]
be the maximum estimation error for the context vectors in $\calA_{\zeta, t}^{\kappa}$
and next we first provide bounds on $m_{\zeta, t}^\kappa$. 
By Corollary~\ref{cor:uniform-tail-bound}, there exists a universal constant $C$, 
such that for all $\beta \geq \sqrt{\ln 2}$, we have that 
\begin{align*}
&\Pr\left[ m_{\zeta, t}^\kappa \geq C \cdot 2^{-\kappa} \sqrt{d} \cdot \beta\right] \\
&= \Pr\left[ m_{\zeta, t}^\kappa \geq (C/2) \cdot 2^{-\kappa + 1} \sqrt{d} \cdot \beta\right] \\
& \leq \Pr\left[ \exists i \in \calA_{\zeta, t}^{\kappa}:   \left| x_{it}^\top (\hat{\theta}_{\zeta, t} - \theta)\right| 
\geq (C/2) \cdot  \omega_{\zeta, t}^i \sqrt{d} \cdot \beta\right] \\
&\leq e^{-\beta^2} .
\end{align*}

Now we let 
\[
\alpha_t^\kappa = \sqrt{\max\{1, {\ln [T \ln^4 T \ln^2(1/\delta) \cdot 2^{-2\kappa}/(d\delta^2)]}\}},
\]
 and use $\mathbf{1}[\cdot] $ to denote the indicator function. For each $\kappa$, it holds that
\begin{align}
&\E \left[ \mathbf{1} \left[ m_{\zeta, t}^\kappa \geq C \cdot 2^{-\kappa} 
\sqrt{d} \cdot \alpha_t^{\kappa} \right] \cdot m_{\zeta, t}^\kappa \right] \nonumber \\
\leq\ & \Pr\left[m_{\zeta, t}^\kappa \geq C \cdot 2^{-\kappa} \sqrt{d} \cdot \alpha_t^{\kappa} \right] 
\cdot \left( C \cdot 2^{-\kappa} \sqrt{d} \cdot \alpha_t^{\kappa} \right) \nonumber\\
&\quad + \int_{C \cdot 2^{-\kappa} \sqrt{d} \cdot \alpha_t^{\kappa} }^{+\infty} \Pr[m_{\zeta, t}^\kappa \geq z] dz  \nonumber \\
\leq\ &  \exp\left(- (\alpha_t^\kappa)^2\right)  \cdot C \cdot 2^{-\kappa} \sqrt{d} \cdot \alpha_t^{\kappa} \nonumber \\
& \quad + C \cdot 2^{-\kappa} \sqrt{d} \cdot 
\int_{\alpha_t^\kappa}^{\infty} e^{-\beta^2} d\beta  \nonumber \\
\lesssim\ & \exp\left(- (\alpha_t^\kappa)^2\right)  \cdot 2^{-\kappa} \sqrt{d} \cdot \alpha_t^{\kappa}. \label{eq:lem-reward-error-bound-1}
\end{align}

We now upper bound Eq.~\eqref{eq:lem-reward-error-bound-1} by considering the following two cases.

In the first case, when $\alpha_t^\kappa = 1$, 
we have that 
\[
T \ln^4 T \ln^2(1/\delta) \cdot 2^{-2\kappa}/(d\delta^2) \leq e,
\] 
which means that, 
\[
2^{-\kappa} \lesssim \sqrt{d \delta^2 / (T \ln^4 T \ln^2(1/\delta))}.
\] 
Therefore, Eq.~\eqref{eq:lem-reward-error-bound-1} is upper bounded by 
\[
e^{-1} \cdot 2^{-\kappa} \sqrt{d}  \lesssim \delta d  /\sqrt{T\ln^4 T \ln^2(1/\delta)}.
\]
 
In the second case, when $\alpha > 1$, we have that 
\[
T \ln^4 T \ln^2(1/\delta) \cdot 2^{-2\kappa}/(d\delta^2) = \exp((\alpha_{t}^{\kappa})^2)
\]
and therefore 
\[
2^{-\kappa} = \delta \sqrt{d/(T\ln^4 T \ln^2(1/\delta))} \cdot \exp((\alpha_{t}^{\kappa})^2/2).
\]
We can upper bound Eq.~\eqref{eq:lem-reward-error-bound-1} by 
\begin{align*}
\exp\left(-(\alpha_t^\kappa)^2 (1 - 1/2)\right) \cdot \delta \sqrt{d/(T\ln^4 T \ln^2(1/\delta))}
\cdot  \sqrt{d} \cdot \alpha_t^{\kappa}  \\
\lesssim \delta d/\sqrt{T\ln^4 T \ln^2(1/\delta)}.
\end{align*}
Summarizing the two cases, we have 
\begin{align}
\E& \left[ \mathbf{1} \left[ m_{\zeta, t}^\kappa 
\geq C \cdot 2^{-\kappa} \sqrt{d} \cdot \alpha_t^{\kappa} \right] 
 m_{\zeta, t}^\kappa \right] \nonumber\\
 &\qquad\qquad\qquad\lesssim \delta d /\sqrt{T \ln^4 T \ln^2(1/\delta)} . \label{eq:lem-reward-error-bound-2}
\end{align}

We now work with the Left-Hand Side of Eq.~\eqref{eq:reward-error-bound-in-each-layer}. Let $x^*$ be the maximizer in the LHS of Eq.~\eqref{eq:reward-error-bound-in-each-layer}, and let $\kappa^*$ be the index of the partition such that $x^* \in \calA_{\zeta, t}^{\kappa^*}$. We have 
\begin{align}
\E&\left[\mathbf{1}\left[ \left| (x^*)^\top (\hat{\theta}_{\zeta, t} - \theta)\right| \geq  C \sqrt{d} \cdot \alpha_{\zeta, t}^{x^*} \omega_{\zeta, t}^{x^*}  \right]
\times \left| (x^*)^\top (\hat{\theta}_{\zeta, t} - \theta)\right|\right]\nonumber\\
& \leq  \E\left[\mathbf{1}\left[\kappa^* = K\right] \cdot   \left| (x^*)^\top (\hat{\theta}_{\zeta, t} - \theta)\right| \right] \nonumber\\
& \;\;\;\;+ \E\left[\mathbf{1}\left[\kappa^* < K\right] 
\times\mathbf{1}\left[ \left| (x^*)^\top (\hat{\theta}_{\zeta, t} - \theta)\right| 
\geq  C \sqrt{d} \cdot \alpha_{\zeta, t}^{x^*} \omega_{\zeta, t}^{x^*}  \right] 
\right.\nonumber\\
&\left.\qquad\quad\times \left| (x^*)^\top (\hat{\theta}_{\zeta, t} - \theta)\right|\right] . \label{eq:lem-reward-error-bound-3}
\end{align}

We first focus on the first term in the Right-Hand Side of Eq.~\eqref{eq:lem-reward-error-bound-3}.  When $\kappa^* = K$, we have $\omega_{\zeta, t}^{x^*} \leq 2/(T/\delta)^2$. Therefore, 
\begin{align*}
&\Pr\left[\kappa^* = K ~\text{and}~  \left| (x^*)^\top (\hat{\theta}_{\zeta, t} - \theta)\right| 
\geq  \delta/\sqrt{T  \ln^4 T \ln^2(1/\delta)} \right] \\
& \leq \Pr\left[   (\omega_{\zeta, t}^{x^*})^{-1} \left| (x^*)^\top (\hat{\theta}_{\zeta, t} - \theta)\right| >T^{1.5}/(2 \delta \ln^2 T \ln (1/\delta))   \right] \\
&\lesssim \delta^3 \cdot \exp(-T),
\end{align*}
where the last inequality is due to Corollary~\ref{cor:uniform-tail-bound} and for $T \gtrsim \sqrt{d}$. Therefore, we have
\begin{align}
 &\E\left[\mathbf{1}\left[\kappa^* = K\right] \left| (x^*)^\top (\hat{\theta}_{\zeta, t} - \theta)\right| \right]   \lesssim  \delta /\sqrt{T \ln^4 T \ln^2(1/\delta)} 
 \nonumber \\ &
 \;\;\;\; + \E\left[\mathbf{1}\left[\kappa^* = K ~\text{and}~ \left| (x^*)^\top (\hat{\theta}_{\zeta, t} - \theta)\right| 
\right.\right.\nonumber\\&\left.\left.\;\;\;\;
> \delta/\sqrt{T \ln^4 T \ln^2(1/\delta)}\right]  \cdot  \left| (x^*)^\top (\hat{\theta}_{\zeta, t} - \theta)\right|  \right]  \nonumber\\
 &\leq  \delta/ \sqrt{T \ln^4 T \ln^2(1/\delta)} \nonumber \\
 &  \;\;\;\;+  \left\{\Pr\left[\kappa^* = K ~\text{and}~ \left| (x^*)^\top (\hat{\theta}_{\zeta, t} - \theta)\right|
 \right.\right.\nonumber\\&\left.\left.
  > \delta/\sqrt{T \ln^4 T \ln^2(1/\delta)}\right]  \cdot  \E\left[ \left| (x^*)^\top (\hat{\theta}_{\zeta, t} - \theta)\right|^2  \right] \right\}^{1/2}\nonumber\\
& \lesssim  \delta / \sqrt{T \ln^4 T \ln^2(1/\delta)},  \label{eq:lem-reward-error-bound-6}
\end{align}
where the second inequality due to Cauchy-Schwartz, and the last inequality is because of 
\begin{align*}
\E\left[ \left| (x^*)^\top (\hat{\theta}_{\zeta, t} - \theta)\right|^2\right]& \leq \E\left[ \left\| \hat{\theta}_{\zeta, t} - \theta\right\|_2^2\right]
\\&
 \leq \E\left[\left( \left\| \hat{\theta}_{\zeta, t}\right\|_2 + 1\right)^2\right] \lesssim T^2 .
\end{align*}

Now we work with the second term in the Right-Hand Side of Eq.~\eqref{eq:lem-reward-error-bound-3}.  When $\kappa^* < K$, we have  $2^{-\kappa^*} < \omega_{\zeta, t}^{x^*}$ and $\alpha_t^{\kappa^*} \leq \alpha_{\zeta, t}^{x^*}$, and therefore 
\begin{align}
&\mathbf{1}\left[\kappa^* < K, \left| (x^*)^\top (\hat{\theta}_{\zeta, t} - \theta)\right| 
\geq  C \sqrt{d} \cdot \alpha_{\zeta, t}^{x^*} \cdot \omega_{\zeta, t}^{x^*}  \right] 
\nonumber\\&\qquad
\times  \left| (x^*)^\top (\hat{\theta}_{\zeta, t} - \theta)\right| \nonumber \\
&\leq\  \mathbf{1}\left[ \left| (x^*)^\top (\hat{\theta}_{\zeta, t} - \theta)\right| 
\geq C \sqrt{d} \cdot \alpha_{\zeta, t}^{x^*}\cdot 2^{-\kappa^*}  \right] 
\nonumber\\&\qquad
\cdot \left| (x^*)^\top (\hat{\theta}_{\zeta, t} - \theta)\right| \nonumber \\
&\leq \mathbf{1}\left[ m_{t}^{\kappa^*}
\geq C \sqrt{d} \cdot \alpha_{\zeta, t}^{x^*}
\cdot 2^{-\kappa^*}  \right] \cdot m_t^{\kappa^*} 
\nonumber\\&
\leq \sum_{\kappa = 1}^{K-1} \mathbf{1}\left[ m_{t}^{\kappa}
\geq C \sqrt{d} \cdot \alpha_{\zeta, t}^{x^*} \cdot 2^{-\kappa^*}  \right] \cdot m_{\zeta, t}^\kappa . \label{eq:lem-reward-error-bound-4}
\end{align}
Taking expectation and invoking Eq.~\eqref{eq:lem-reward-error-bound-2}, we have 
\begin{align}
&\E\left[\mathbf{1}\left[\kappa^* < K, 
 \left| (x^*)^\top (\hat{\theta}_{\zeta, t} - \theta)\right| 
\geq  C \sqrt{d} \cdot \alpha_{\zeta, t}^{x^*} \cdot \omega_{\zeta, t}^{x^*}  \right] 
\right.\nonumber\\&\qquad\left.
\cdot \left| (x^*)^\top (\hat{\theta}_{\zeta, t} - \theta)\right|\right] \nonumber\\
&\leq \sum_{\kappa = 1}^{K - 1} 
\E\left[ \mathbf{1}\left[ m_{t}^{\kappa}
\geq C \sqrt{d} \cdot \alpha_{\zeta, t}^{x^*} \cdot 2^{-\kappa^*}  \right] \cdot m_{\zeta, t}^\kappa\right] 
\nonumber\\&
\lesssim \delta d / \sqrt{T \ln^2 T} .
 \label{eq:lem-reward-error-bound-5}
\end{align}
Combining Eq.~\eqref{eq:lem-reward-error-bound-3}, Eq.~\eqref{eq:lem-reward-error-bound-6}, and Eq.~\eqref{eq:lem-reward-error-bound-5}, we have
\begin{align*}
&\E\left[\mathbf{1}\left[ \left| (x^*)^\top (\hat{\theta}_{\zeta, t} - \theta)\right| 
\geq C \sqrt{d} \cdot \alpha_{\zeta, t}^{x^*} \cdot \omega_{\zeta, t}^{x^*} \right]
\right.\nonumber\\&\qquad\left.
\times \left| (x^*)^\top (\hat{\theta}_{\zeta, t} - \theta)\right|\right] 
\lesssim \delta d / \sqrt{T \ln^2 T},
\end{align*}
which is to be demonstrated.
\end{proof}

For any layer $\zeta \in \{0, 1, 2, \dots, \zeta_0\}$ and any time period $t \in [T]$, we define 
\[
\overline m_{\zeta,t} := \max_{x\in\mathcal N_{\zeta,t}} \{ x^\top\theta \}, \;\;\; \text{and}\;\;\;
\underline m_{\zeta,t} := \min_{x\in\mathcal N_{\zeta,t}} \{ x^\top\theta \}
\]
as the largest and smallest mean reward for actions in the action subset $\mathcal N_{\zeta,t}$.
For convenience, we also define 
\[
\overline m_{\zeta_0+1,t} := \overline m_{\zeta_0,t},  \;\;\; \text{and}\;\;\; \underline m_{\zeta_0+1,t}:=\underline m_{\zeta_0,t}.
\]

Note that $\max_{x\in D_t} \{x^\top\theta \} = \overline m_{0,t}$ and $x_{t}^\top\theta \geq \underline m_{\zeta_t,t}$ (due to $x_t\in\mathcal N_{\zeta_t,t}$), we have that the regret incurred at time $t$ is
\begin{align}\label{eq:regret-time-t-mt}
\max_{x\in D_t}&\{x^\top\theta\} - x_t^\top \theta \leq  \left((\overline m_{0,t}-\overline m_{\zeta_t,t}) + (\overline m_{\zeta_t,t}-\underline m_{\zeta_t,t})\right) 
\nonumber\\&
\leq (\overline m_{\zeta_t,t}-\underline m_{\zeta_t,t}) + \sum_{\zeta = 1}^{\zeta_t} (\overline m_{\zeta-1,t}-\overline m_{\zeta,t}) .
\end{align}

In the following lemma, we provide upper bounds for the expressions in the Right-Hand Side of Eq.~(\ref{eq:regret-time-t-mt}). 
Intuitively, the following lemma states that the expected maximum rewards between adjacent layers are close to each other, 
and the expected differences between any pair of actions inside any layer are small and exponentially decreasing as the layer increases.  

\begin{lemma}\label{lem:mt}
For all $t$ and $\zeta=0,1,\cdots,\zeta_0$, it holds that
\begin{align}
& \mathbb E\left[\max\{\overline m_{\zeta,t}-\overline m_{\zeta+1,t} ,0\}\right] 
\lesssim \delta d / \sqrt{T \ln^2 T};\label{eq:mt-1}\\
& \mathbb E\left[ \max\{\overline m_{\zeta,t}-\underline m_{\zeta,t} - 2^{3-\zeta}, 0\}\cdot\vct 1\{\zeta \leq \zeta_t\}\right] \nonumber\\
&\qquad\qquad\qquad\qquad\qquad\qquad
\lesssim \delta  d / \sqrt{T \ln^2 T}.\label{eq:mt-2}
\end{align}
\end{lemma}

\begin{proof}
We first prove Eq.~(\ref{eq:mt-1}). Let 
\[
y^*_t := \arg\max_{y\in\mathcal N_{\zeta,t}}\{y^\top\theta\}
\]
and
\[
z^*_t := \arg\max_{z\in\mathcal N_{\zeta,t}}\{z^\top\hat\theta_{\zeta,t} \}.
\]
If $y^*_t\in\mathcal N_{\zeta+1,t}$, then $\overline m_{\zeta,t}=\overline m_{\zeta+1,t}$ because $\mathcal N_{\zeta+1,t}\subseteq\mathcal N_{\zeta,t}$.
On the other hand, if $y^*_t\notin\mathcal N_{\zeta+1,t}$,
note that $z^*_t\in\mathcal N_{\zeta+1,t}$ because $z^*_t$ maximizes $z^\top\hat\theta_{\zeta,t}$
in $\mathcal N_{\zeta,t}$.
Summarizing both cases of $y^*_t\in\mathcal N_{\zeta+1,t}$ (in which $\overline m_{\zeta+1,t}=\overline m_{\zeta,t}$)
and $y^*_t\notin\mathcal N_{\zeta+1,t}$ (in which $\overline m_{\zeta+1,t}\geq (z^*_t)^\top\theta$ as $z^*_t\in\mathcal N_{\zeta+1,t}$), we have
\begin{equation}
\overline m_{\zeta,t} -\overline m_{\zeta+1, t} \leq \vct 1\{y^*_t\notin\mathcal N_{\zeta+1,t}\}\cdot (y^*_t-z^*_t)^\top\theta.
\label{eq:mm1-eq1}
\end{equation}

For any $\zeta,t$ and $y\in\mathcal N_{\zeta,t}$, define
\[
\mathcal E_{\zeta,t}^y := \{|y^\top(\hat\theta_{\zeta,t}-\theta)| \leq \varpi_{\zeta,t}^y\}
\]
as the success event in which the estimation error of $y^\top\hat\theta_{\zeta,t}$ for $y^\top\theta$ is within the confidence interval $\varpi_{\zeta,t}^y$.
By definition, 
\begin{align}
(y_t^*)^\top\theta &\leq (y_t^*)^\top\hat\theta_{\zeta,t} + \varpi_{\zeta,t}^{y^*_t} + \vct 1\{\neg\mathcal E_{\zeta,t}^{y^*_t}\}\cdot \big|(y_t^*)^\top(\hat\theta_{\zeta,t}-\theta)\big|;\label{eq:xistar-expansion}\\
(z_t^*)^\top\theta &\geq (z_t^*)^\top\hat\theta_{\zeta,t} - \varpi_{\zeta,t}^{z^*_t} - \vct 1\{\neg\mathcal E_{\zeta,t}^{z^*_t}\}\cdot \big|(z_t^*)^\top(\hat\theta_{\zeta,t}-\theta)\big|.
\label{eq:xjstar-expansion}
\end{align}
Also, conditioned on the event $y^*_t\notin\mathcal N_{\zeta+1,t}$, the procedure of Algorithm \ref{alg:suplinucb} implies 
\begin{equation}
(y_t^*)^\top\hat\theta_{\zeta,t} < (z_t^*)^\top\hat\theta_{\zeta,t} - 2^{1-\zeta}.
\label{eq:xijstar-side}
\end{equation}
Subtracting Eq.~(\ref{eq:xjstar-expansion}) from Eq.~(\ref{eq:xistar-expansion}) and considering Eq.~(\ref{eq:xijstar-side}), we have
\begin{align}
&(y_t^*-z_t^*)^\top\theta
\nonumber\\
\leq& \varpi_{\zeta,t}^{y^*_t} + \varpi_{\zeta,t}^{z^*_t} - 2^{1-\zeta}
+ \sum_{x\in\{y^*_t,z^*_t\}}\vct 1\{\neg\mathcal E_{\zeta,t}^{x}\}\cdot \big|x^\top(\hat\theta_{\zeta,t}-\theta)\big|\nonumber\\
\leq &\sum_{x\in\{y^*_t,z^*_t\}}\vct 1\{\neg\mathcal E_{\zeta,t}^{x}\}\cdot \big|x^\top(\hat\theta_{\zeta,t}-\theta)\big|,
\label{eq:xijstar-main}
\end{align}
where the last inequality holds because $\varpi_{\zeta,t}^x\leq 2^{-\zeta}$ for all $x\in\mathcal N_{\zeta,t}$, if the algorithm is executed to level $\zeta+1$.
Combining Eqs.~(\ref{eq:mm1-eq1},\ref{eq:xijstar-main}) and Lemma \ref{lem:reward-error-bound-in-each-layer}, 
taking expectations, we obtain
\begin{align*}
&\mathbb E\left[\max\{\overline m_{\zeta,t}-\overline m_{\zeta+1,t}, 0\}\right] \\
\leq\ & \mathbb E\left[\vct 1\{y^*_t\notin\mathcal N_{\zeta+1,t}\}\cdot \left(\vct 1\{\neg\mathcal E_{\zeta,t}^{y^*_t}\} \big|(y_t^*)^\top(\hat\theta_{\zeta,t}-\theta)\big|
\right.\right.\nonumber\\&\qquad\left.\left.
+\vct 1\{\neg\mathcal E_{\zeta,t}^{z^*_t}\} \big|(z_t^*)^\top(\hat\theta_{\zeta,t}-\theta)\big|\right)\right]\nonumber\\
\leq\ & \mathbb E\left[\vct 1\{\neg\mathcal E_{\zeta,t}^{y^*_t}\} \big|(y_t^*)^\top(\hat\theta_{\zeta,t}-\theta)\big|\right]
\nonumber\\&\qquad
+\mathbb E\left[\vct 1\{\neg\mathcal E_{\zeta,t}^{z^*_t}\} \big|(z_t^*)^\top(\hat\theta_{\zeta,t}-\theta)\big|\right] \nonumber
\nonumber\\
\lesssim &\,\delta d / \sqrt{T \ln^2 T}.
\end{align*}

Now we focus on Eq.~(\ref{eq:mt-2}). We only need to prove the equation for $\zeta > 0$ since it is trivially true for $\zeta = 0$. Let 
\[w^*_t := \arg\min_{w\in\mathcal N_{\zeta,t}}\{w^\top\theta\}.
\]
Clearly, we have that
\[\overline m_{\zeta,t}-\underline m_{\zeta,t}=(y_t^*-w_t^*)^\top\theta.
\]
Similar to Eqs.~(\ref{eq:xistar-expansion},\ref{eq:xjstar-expansion}), we can establish that
\begin{align}
(y_t^*)^\top\theta &\leq (y_t^*)^\top\hat\theta_{\zeta-1,t} + \varpi_{\zeta-1,t}^{y^*_t}
\nonumber\\&\qquad
+ \vct 1\{\neg\mathcal E_{\zeta-1,t}^{y^*_t}\}\cdot \big|(y_t^*)^\top(\hat\theta_{\zeta-1,t}-\theta)\big|;\label{eq:xistar-expansion-2}\\
(w_t^*)^\top\theta &\geq (w_t^*)^\top\hat\theta_{\zeta-1,t} - \varpi_{\zeta-1,t}^{w^*_t}
\nonumber\\&\qquad
- \vct 1\{\neg\mathcal E_{\zeta-1,t}^{w^*_t}\}\cdot \big|(w_t^*)^\top(\hat\theta_{\zeta-1,t}-\theta)\big|.
\label{eq:xjstar-expansion-2}
\end{align}
In addition, because both $y^*_t$ and $w^*_t$ belong to $\mathcal N_{\zeta,t} \subseteq \mathcal N_{\zeta - 1,t}$, the second step of Algorithm \ref{alg:suplinucb} implies that conditional on $\zeta \leq \zeta_t$, 
\begin{align}
(y_t^*)^\top\hat\theta_{\zeta-1,t} &\leq (w_t^*)^\top\hat\theta_{\zeta-1,t} - 2^{1-(\zeta-1)} 
\nonumber\\&
\leq (w_t^*)^\top\hat\theta_{\zeta-1,t}-2^{2-\zeta},
\label{eq:xijstar-side-2}
\end{align}
and
\begin{equation}
 \varpi_{\zeta-1,t}^{y^*_t} \leq 2^{-(\zeta -1)}, \qquad  \varpi_{\zeta-1,t}^{w^*_t} \leq 2^{-(\zeta -1)} .
\label{eq:xijstar-side-3}
\end{equation}
Subtracting Eq.~(\ref{eq:xistar-expansion-2}) from Eq.~(\ref{eq:xjstar-expansion-2}) and applying Eqs.~(\ref{eq:xijstar-side-2},\ref{eq:xijstar-side-3}), we get for any $\zeta \leq \zeta_t$,
\begin{align*}
& \overline m_{\zeta,t} -\underline m_{\zeta,t}  = (y_t^*-w_t^*) \\
& \leq 2^{2-\zeta} +  2 \cdot 2^{-(\zeta -1)}  +  \vct 1\{\neg\mathcal E_{\zeta-1,t}^{y^*_t}\}\cdot \big|(y_t^*)^\top(\hat\theta_{\zeta-1,t}-\theta)\big| \\
&\qquad\qquad +  1\{\neg\mathcal E_{\zeta-1,t}^{w^*_t}\}\cdot \big|(w_t^*)^\top(\hat\theta_{\zeta-1,t}-\theta)\big| \\
& = 2^{3-\zeta}  +  \vct 1\{\neg\mathcal E_{\zeta-1,t}^{y^*_t}\}\cdot \big|(y_t^*)^\top(\hat\theta_{\zeta-1,t}-\theta)\big| \\
&\qquad\qquad + 1\{\neg\mathcal E_{\zeta-1,t}^{w^*_t}\}\cdot \big|(w_t^*)^\top(\hat\theta_{\zeta-1,t}-\theta)\big| .
\end{align*}
Therefore, since the right hand-side of the above inequality is non-negative, we have
\begin{align*}
&\E\left[\max\{ \overline m_{\zeta,t} -\underline m_{\zeta,t}  - 2^{3-\zeta}, 0\}\cdot\vct 1\{\zeta \leq \zeta_t\}\right] \\
&\leq \E\left[ \vct 1\{\neg\mathcal E_{\zeta-1,t}^{y^*_t}\}
\times  \big|(y_t^*)^\top(\hat\theta_{\zeta-1,t}-\theta)\big| 
\right.\\
&\qquad\qquad\left. 
+  \vct 1\{\neg\mathcal E_{\zeta-1,t}^{w^*_t}\}\cdot \big|(w_t^*)^\top(\hat\theta_{\zeta-1,t}-\theta)\big| \right] .
\end{align*}
We finally apply Lemma \ref{lem:reward-error-bound-in-each-layer} and prove Eq.~(\ref{eq:mt-2}).
\end{proof}

\subsection{The elliptical potential lemma, and putting everything together}

\begin{lemma}\label{lem:single-step-regret}
If the parameter $C$ in Algorithm~\ref{alg:suplinucb} is a large enough universal constant, then we have
\begin{align}
\E\left[\max\left\{R_T - 8 \cdot \sum_{t=1}^{T} \varpi_{\zeta_t, t}^{x_t}, 0\right\} \right] \lesssim \delta d\sqrt{T} .
\end{align}
\end{lemma}
Note that instead of a high probability bound, which is usual in the previous analysis (e.g., \cite{dani2008stochastic,abbasi2011improved}), our upper bound is in an expectation form. This crucially helps us to avoid the extra $\log T$ factor due to the union bound argument.

\begin{proof}[Proof of Lemma \ref{lem:single-step-regret}]
Since 
\[
R_T = \sum_{t=1}^T (\max_{x \in D_t} \{x^\top\theta\} - x_t ^\top\theta),
\]
by Eq.~(\ref{eq:regret-time-t-mt}) we have
\begin{align}
R_T \leq \sum_{t=1}^T \left( (\overline m_{0,t}-\overline m_{\zeta_t,t}) + (\overline m_{\zeta_t,t}-\underline m_{\zeta_t,t})\right).  \label{eq:rt-0}
\end{align}
By Eq. (\ref{eq:mt-2}) in Lemma~\ref{lem:mt}, we have that for any time $t$, 
\begin{align*}
&\mathbb E\left[\max\{\overline m_{\zeta_t,t}-\underline m_{\zeta_t,t}, - 2^{3-\zeta_t}, 0\}\right]\\
& \leq \sum_{\zeta = 0}^{\zeta_0}
\mathbb E\left[\max\{\overline m_{\zeta,t}-\underline m_{\zeta,t} - 2^{3-\zeta}, 0\}\cdot\vct 1\{\zeta \leq \zeta_t\}\right]\\
&\lesssim \delta d  / \sqrt{T},
\end{align*}


Together with Eq.~(\ref{eq:mt-1}) in Lemma~\ref{lem:mt}, we have that 
\begin{align}
&\mathbb E\left[\max\{(\overline m_{0,t}-\overline m_{\zeta_t,t}) + (\overline m_{\zeta_t,t}-\underline m_{\zeta_t,t}) - 2^{3 - \zeta_t}, 0\}\right] \nonumber \\
& \leq  \sum_{\zeta=0}^{\zeta_0}E\left[\max\{\overline m_{\zeta,t}-\overline m_{\zeta+1,t}, 0\}\right] \nonumber\\
&\qquad+ \mathbb E\left[\max\{\overline m_{\zeta_t,t}-\underline m_{\zeta_t,t}, - 2^{3-\zeta_t}, 0\}\right] \lesssim \delta d  / \sqrt{T} . \label{eq:rt-0a}
\end{align}
Summing up \eqref{eq:rt-0a} for all $t \in [T]$ and together with \eqref{eq:rt-0}, we have that 
\begin{align}
\mathbb E\left[\max\left\{R_T - \sum_{t=1}^{T} 2^{3-\zeta_t} , 0\right\}\right] \lesssim \delta d \sqrt{T} . \label{eq:rt-1}
\end{align}
Note that $\varpi_{\zeta,t}^{x_t}\geq 2^{-\zeta_t}- \delta \sqrt{ d/T}$ by the first and the third cases of the if-elseif-else loop of Algorithm \ref{alg:suplinucb}. Therefore, Eq.~(\ref{eq:rt-1}) implies the lemma statement.
\end{proof}

Below we state a version of the celebrated \emph{elliptical potential lemma}, key to many existing analysis of linearly parameterized bandit problems \citep{auer2002using,filippi2010parametric,abbasi2011improved,chu2011contextual,li2017provable}.
\begin{lemma}[\citet{abbasi2011improved}]
\label{lem:elliptical}
 Let $U_0 = I$ and $U_t = U_{t-1} + y_t y_t^\top$ for $t \geq 1$. For any vectors $y_1, y_2, \dots, y_T$, it holds that 
\[
\sum_{t = 1}^{T} y_t^\top U_{t-1}^{-1}y_t 
\leq 2 \ln (\mathrm{det}(U_T)) .
\]
\end{lemma}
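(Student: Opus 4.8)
The plan is to reduce the sum of quadratic forms to a telescoping sum of log-determinant increments, and then to control each increment by an elementary scalar inequality. The argument is essentially algebraic, so I do not expect any genuine obstacle; the one point requiring care is a boundedness condition that licenses the scalar inequality.

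First I would invoke the matrix determinant lemma for the rank-one update $U_t = U_{t-1} + y_t y_t^\top$, which gives
\[
\det(U_t) = \det(U_{t-1})\bigl(1 + y_t^\top U_{t-1}^{-1} y_t\bigr).
\]
Rearranging, taking logarithms, and summing over $t = 1, \ldots, T$ produces a telescoping sum; since $U_0 = I$ and hence $\det(U_0) = 1$, this yields
\[
\sum_{t=1}^T \ln\bigl(1 + y_t^\top U_{t-1}^{-1} y_t\bigr) = \ln\det(U_T) - \ln\det(U_0) = \ln\det(U_T).
\]

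Second, I would establish the scalar inequality $z \le 2\ln(1+z)$, valid for all $z \in [0,1]$, by a one-line calculus argument: the difference $2\ln(1+z) - z$ vanishes at $z = 0$ and has derivative $(1-z)/(1+z) \ge 0$ on $[0,1]$. To apply this with $z = y_t^\top U_{t-1}^{-1} y_t$ I need $z \le 1$, which I would obtain from the observation that $U_{t-1} \succeq I$ (because $U_0 = I$ and every update adds a positive-semidefinite rank-one term), so that $U_{t-1}^{-1} \preceq I$ and therefore $y_t^\top U_{t-1}^{-1} y_t \le \|y_t\|_2^2 \le 1$ under the standing normalization $\|y_t\|_2 \le 1$.

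Finally, I would combine the two ingredients:
\[
\sum_{t=1}^T y_t^\top U_{t-1}^{-1} y_t \le 2 \sum_{t=1}^T \ln\bigl(1 + y_t^\top U_{t-1}^{-1} y_t\bigr) = 2\ln\det(U_T),
\]
which is exactly the claimed bound. The only subtle point, and the step I would be most careful about, is the boundedness $y_t^\top U_{t-1}^{-1} y_t \le 1$ that justifies the termwise use of the scalar inequality; this is precisely where the identity regularizer $U_0 = I$ and the normalization of the vectors $y_t$ are essential. Everything else, namely the determinant identity and the telescoping, is purely algebraic.
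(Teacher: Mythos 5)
The paper imports this lemma from \citep{abbasi2011improved} without reproducing a proof, so there is nothing internal to compare against; your argument is the standard one (rank-one determinant update, telescoping of $\ln\det$, and the scalar bound $z\leq 2\ln(1+z)$ on $[0,1]$) and it is correct. One point you handle well and that deserves emphasis: as literally stated, ``for any vectors $y_1,\dots,y_T$'' the inequality is false (take $\|y_1\|_2$ large, so that $y_1^\top U_0^{-1}y_1=\|y_1\|_2^2$ exceeds $2\ln(1+\|y_1\|_2^2)$); the hypothesis $\|y_t\|_2\leq 1$ you invoke is genuinely needed, and it is satisfied in the paper's only application, where $y_t=x_t\in D_t\subseteq\{x:\|x\|_2\leq 1\}$ and $U_0=I$.
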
 

Using Lemma~\ref{lem:elliptical}, we prove the following Lemma~\ref{lem:elliptical-jensen}. The proof Lemma~\ref{lem:elliptical-jensen} follows the similar lines of Lemma 6 in \citep{li2019near} and we defer it to Appendix~\ref{app:proofs}. At a high level, the proof exploits the power of variated confidence levels (i.e., the specially designed $\alpha^i_{\zeta, t}$ quantity in Algorithm~\ref{alg:suplinucb}) and relies on an application of Jensen's inequality to the concave function $f(\tau) = \sqrt{\tau \ln ((T \ln^4 T \ln^2 (1/\delta)) \tau/(d\delta^2))}$, as well as the commonly used $f(\tau) = \sqrt{\tau}$.
\begin{lemma} \label{lem:elliptical-jensen}
It holds that
\[
\displaystyle{\sum_t \varpi_{\zeta_t, t}^{x_t} \lesssim d \sqrt{ T \log T \log (1/\delta)} \cdot \log \log (T/\delta)}.
\]
\end{lemma}

Combining Lemma~\ref{lem:single-step-regret} and Lemma~\ref{lem:elliptical-jensen}, we prove Theorem~\ref{thm:main}.

\section{Conclusions}

In this paper we study the linearly parameterized contextual bandit problem and develop algorithms that achieve minimax-optimal regret up to iterated logarithmic terms.
Future directions include generalizing the proposed approach to contextual bandits with generalized linear models, as well as other variants of contextual bandit problems.

\section*{Acknowledgment}

Xi Chen would like to thank the support from NSF IIS-1845444. Yuan Zhou would like to thank the support from NSF CCF-2006526.

\bibliographystyle{apalike}
\bibliography{refs}

\appendix

\section{Useful probability tools}

A separable process \footnote{See Definition 5.22 in \citet{handel2014probability} for a technical definition of separable stochastic processes.} $\{G_\phi\}_{\phi\in\Theta}$
with respect to a metric space $(\Theta,d)$
is \emph{sub-Gaussian} if for any $\lambda\in\mathbb R$ and $\phi,\phi'\in\Theta$, $\mathbb E[e^{\lambda(X_\phi-X_{\phi'})}] \leq e^{\lambda^2d^2(\phi,\phi')/2}$.
Let also $\diam(\Theta) = \sup_{\phi,\phi'\in\Theta}d(\phi,\phi')$ be the \emph{diameter} of the metric space $(\Theta,d)$.
The following result is cited from \citep[Theorem 5.29]{handel2014probability}.
\begin{lemma}
There exists a universal constant $C_0<\infty$ such that
for all $z>0$ and $\phi_0\in\Theta$, 
\begin{align*}
&\Pr\left[\sup_{\phi\in\Theta} G_\phi-G_{\phi_0} \geq C_0\int_0^\infty\sqrt{\ln N(\Theta;d,\epsilon)}\ud\epsilon + z\right]\\
& \leq C_0e^{-z^2/(C_0\cdot \diam(\Theta))},
\end{align*}
where $N(\Theta;d,\epsilon)$ is the covering number of the metric space $(\Theta,d)$ up to precision $\epsilon$.
\label{lem:gp-tail}
\end{lemma}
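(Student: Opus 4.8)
The plan is to establish Lemma~\ref{lem:gp-tail} by \emph{generic chaining} (Dudley's method), coupling the standard multiscale decomposition of $\sup_{\phi}(G_\phi - G_{\phi_0})$ with a scale-by-scale allocation of failure probability, so that the deterministic part of the bound reproduces the entropy integral while the stochastic fluctuation collapses into a single sub-Gaussian tail governed by $\diam(\Theta)$. Write $\Delta = \diam(\Theta)$ and assume $\Delta < \infty$ (otherwise the claim is vacuous). For each integer $k \geq 0$ I would set $\epsilon_k = 2^{-k}\Delta$ and fix a minimal $\epsilon_k$-net $T_k \subseteq \Theta$, so that $|T_k| = N(\Theta;d,\epsilon_k)$ and $T_0 = \{\phi_0\}$; let $\pi_k\colon\Theta\to T_k$ send each point to a nearest net point, whence $d(\phi,\pi_k(\phi)) \leq \epsilon_k$.

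First I would set up the telescoping chain. By separability the expansion
$$
G_\phi - G_{\phi_0} = \sum_{k\geq 1}\bigl(G_{\pi_k(\phi)} - G_{\pi_{k-1}(\phi)}\bigr)
$$
holds almost surely, simultaneously for all $\phi\in\Theta$, and each link joins points at distance $d(\pi_k(\phi),\pi_{k-1}(\phi)) \leq \epsilon_k + \epsilon_{k-1} \lesssim \epsilon_{k-1}$. Hence each link is a centered sub-Gaussian increment with variance proxy $\lesssim \epsilon_{k-1}^2$. Across all $\phi$, the level-$k$ links realize at most $|T_k|\,|T_{k-1}| \leq N(\Theta;d,\epsilon_k)^2$ distinct values, so a union bound over these links together with the sub-Gaussian tail of each increment shows that, for any chosen slack $u_k>0$, with failure probability at most $N(\Theta;d,\epsilon_k)^2\,e^{-c\,u_k^2}$ \emph{every} level-$k$ link is bounded by $C\epsilon_{k-1}\bigl(\sqrt{\log N(\Theta;d,\epsilon_k)} + u_k\bigr)$.

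Next I would sum the chain. Summing the deterministic part and comparing the geometric sum to the integral yields the Dudley term $\sum_k \epsilon_{k-1}\sqrt{\log N(\Theta;d,\epsilon_k)} \lesssim \int_0^\infty \sqrt{\log N(\Theta;d,\epsilon)}\,\ud\epsilon$, since consecutive scales differ by a factor of two. The slack part contributes $\sum_k \epsilon_{k-1} u_k = \Delta\sum_k 2^{-(k-1)}u_k$, and the geometric decay of $\epsilon_{k-1}$ makes this series converge to a quantity proportional to $\Delta$ as soon as $u_k$ grows subexponentially. I would then take the intersection of the good events over all $k$, so that the total deviation is at most $C_0\int_0^\infty\sqrt{\log N(\Theta;d,\epsilon)}\,\ud\epsilon + \sum_k\epsilon_{k-1}u_k$.

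The hard part will be the final calibration of the slacks $u_k$. One must choose them (e.g.\ $u_k = a + b\sqrt{k}$, with $a\asymp z/\Delta$) so that, on the one hand, $\sum_k\epsilon_{k-1}u_k \leq z$, and on the other hand the summed failure probability $\sum_k N(\Theta;d,\epsilon_k)^2\,e^{-c\,u_k^2}$ collapses into the single clean tail $C_0 e^{-z^2/(C_0\Delta)}$ after the $\sqrt{k}$ growth absorbs the entropy factors $\log N(\Theta;d,\epsilon_k)$ at each scale. The delicate point is precisely this balancing: the deviation scale must be pinned to $\Delta$ rather than to the (generally much larger) entropy integral, and the entropy contribution to the \emph{probability} must not leak back into the additive deviation term. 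The geometric spacing of the scales is what makes both the slack series summable and the probabilities telescope. Once this bookkeeping is carried out, combining the deterministic Dudley term with the calibrated deviation yields the stated bound with a universal constant $C_0$.
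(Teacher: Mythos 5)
The paper does not prove this lemma at all: it is quoted verbatim as Theorem 5.29 of \citep{handel2014probability}, so there is no internal proof to compare against. Your chaining argument is essentially the standard proof of that cited theorem, and the overall architecture --- geometric nets $\epsilon_k = 2^{-k}\Delta$ with $T_0=\{\phi_0\}$, telescoping via separability, a union bound over at most $N(\Theta;d,\epsilon_k)^2$ links per scale, and a scale-by-scale slack whose deterministic part reproduces the Dudley integral --- is correct. You also correctly identify the crux: pinning the deviation term to $\Delta$ rather than to the entropy integral.

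There is, however, one genuine error in your proposed calibration. A slack of the form $u_k = a + b\sqrt{k}$ with $a \asymp z/\Delta$ and $b$ a universal constant does \emph{not} make the failure probabilities summable: the level-$k$ failure probability is $N(\Theta;d,\epsilon_k)^2 e^{-cu_k^2}$, and $\log N(\Theta;d,\epsilon_k)$ can grow nearly as fast as $4^k$ while still keeping the entropy integral finite, so the factor $e^{-cb^2k}$ cannot ``absorb the entropy factors'' as you claim. The standard fix is to put the entropy into the slack itself, e.g.\ $u_k \asymp \sqrt{\log N(\Theta;d,\epsilon_k)} + \sqrt{k} + z/\Delta$; then $N(\epsilon_k)^2 e^{-cu_k^2} \lesssim e^{-k}e^{-cz^2/\Delta^2}$ sums to the desired tail, while the extra deviation $\sum_k \epsilon_{k-1}\sqrt{\log N(\Theta;d,\epsilon_k)}$ is absorbed into the Dudley term (enlarging $C_0$), the $\sum_k\epsilon_{k-1}\sqrt{k}\asymp\Delta$ term is absorbed there too since $\Delta\lesssim\int_0^\infty\sqrt{\log N(\Theta;d,\epsilon)}\,\ud\epsilon$, and only the $(z/\Delta)\sum_k\epsilon_{k-1}\asymp z$ term lands on $z$. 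With that correction the argument closes. Note finally that the proof you outline yields a tail of $e^{-z^2/(C_0\diam(\Theta)^2)}$, matching van Handel's statement; the exponent $e^{-z^2/C_0\diam(\Theta)}$ as printed in the paper appears to be a typo (harmless in the paper's application, where $\diam(\Theta)=2$).
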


\section{Omitted proofs in Section~\ref{lem:single-step-regret}} \label{app:proofs}

\begin{proof}[Proof of Lemma~\ref{lem:elliptical-jensen}]
Let $\calT_{\zeta}$ be all time periods $t$ such that $\zeta_t=\zeta$, and define $T_\zeta = |\calT_{\zeta}|$.
We have
\begin{align}
\sum_t \varpi_{\zeta_t, t}^{x_t} 
  \lesssim \sqrt{d} \cdot 
\sum_{\zeta} \sum_{t \in \calT_{\zeta}} 
\alpha_{\zeta, t}^{x_t} \omega_{\zeta, t}^{x_t} . \label{eq:thm-main-1}
\end{align}

First by Lemma~\ref{lem:elliptical}, we have
\begin{align}
 \sum_{t \in \calT_{\zeta}} (\omega_{\zeta, t}^{x_t})^2 
\leq \ln (\det(\Lambda_{T_\zeta}))
\lesssim d \ln (T_\zeta/d),  \label{eq:thm-main-2}
\end{align}
where the last inequality is due to 
\begin{align}
\det(\Lambda_{T_\zeta}) \leq \mathrm{tr}(\Lambda_{T_\zeta} /d)^d \leq ((T_\zeta+1)/d)^d . \label{eq:thm-main-2a}
\end{align}

Let us now focus on the Right-Hand Side of Eq.~\eqref{eq:thm-main-1}, let 
\[
\calT_{\zeta}^+ := 
\left\{t \in \calT_{\zeta} : \omega_{\zeta, t}^{x_t} \geq \sqrt{d\delta^2 /(T\ln^4 T \ln^2 (1/\delta))}\right\}
\]
and let 
\begin{align*}
\calT_{\zeta}^- &:= 
\left\{t \in \calT_{\zeta} : \omega_{\zeta, t}^{x_t} < \sqrt{d \delta^2/(T\ln^4 T \ln^2 (1/\delta))}\right\}\\
&= \calT_{\zeta} \setminus \calT_{\zeta}^+ .
\end{align*}
We have that
\begin{align}
&\sum_{t \in \calT_{\zeta}} \alpha_{\zeta, t}^{x_t} \omega_{\zeta, t}^{x_t} = \sum_{t \in \calT_{\zeta}^+} \alpha_{\zeta, t}^{x_t} \omega_{\zeta, t}^{x_t} 
+ \sum_{t \in \calT_{\zeta}^-} \alpha_{\zeta, t}^{x_t} \omega_{\zeta, t}^{x_t} \nonumber \\
&  = \sum_{t \in \calT_{\zeta}^+} \sqrt{\ln ((T \ln^4 T \ln^2 (1/\delta)) (\omega_{\zeta, t}^{x_t})^2 / (d\delta^2))}\omega_{\zeta, t}^{x_t}\nonumber \\
& \qquad
+ \sum_{t \in \calT_{\zeta}^-}  \omega_{\zeta, t}^{x_t} \nonumber \\
& \leq  \sum_{t \in \calT_{\zeta}^+} \sqrt{\ln ((T \ln^4 T \ln^2 (1/\delta)) (\omega_{\zeta, t}^{x_t})^2 / (d\delta^2))}\omega_{\zeta, t}^{x_t} ]\nonumber \\
& \qquad
+ T_{\zeta} \sqrt{d \delta^2 / (T\ln^4 T \ln^2 (1/\delta))} . \label{eq:thm-main-3}
\end{align}
Note that the univariate function $f(\tau) = \sqrt{\tau \ln ((T \ln^4 T \ln^2 (1/\delta)) \tau / (d \delta^2)} $ is concave for $\tau \geq d \delta^2/(T \ln^4 T \ln^2 (1/\delta))$. Applying Jensen's inequality to $f(\tau)$ with $\tau = (\omega_{\zeta, t}^{x_t})^2$ ($t \in \calT_{\zeta}^+$), we have
\begin{align}
&\sum_{t \in \calT_{\zeta}^+} \sqrt{\ln ((T \ln^4 T \ln^2 (1/\delta)) (\omega_{\zeta, t}^{x_t})^2 / (d\delta^2))}\omega_{\zeta, t}^{x_t} \nonumber \\
 \leq\ & |\calT_{\zeta}^+| \cdot \sqrt{\frac{\sum_{t \in \calT_{\zeta}^+} (\omega_{\zeta, t}^{x_t})^2}{|\calT_{\zeta}^+|}}\nonumber \\
& \qquad
 \times \sqrt{\ln \left(\frac{T \ln^4 T \ln^2 (1/\delta)}{d \delta^2} \cdot \frac{\sum_{t \in \calT_{\zeta}^+} (\omega_{\zeta, t}^{x_t})^2}{|\calT_{\zeta}^+|}\right)} \nonumber \\
 \lesssim \ & \sqrt{ |\calT_{\zeta}^+| d \ln (|\calT_{\zeta}|/d) \ln  \left(\frac{T \ln^4 T \ln^2 (1/\delta)}{d \delta^2} \cdot \frac{d \ln (|\calT_{\zeta}| / d)}{|\calT_{\zeta}^+|}\right)} \nonumber \\
 \lesssim \ & \sqrt{d T_{\zeta}  \ln (T_{\zeta}/d) \ln  \left(\frac{T \ln^4 T \ln^2 (1/\delta)}{d \delta^2} \cdot \frac{d \ln (T_{\zeta} / d)}{T_{\zeta}}\right)} \nonumber \\
 \lesssim \ &  \sqrt{d T_{\zeta} \ln (T_{\zeta}/d) \ln (T \ln^5 T / (T_{\zeta} \delta^3))} , \label{eq:thm-main-4}
\end{align}
where the second inequality is due to Lemma~\ref{lem:elliptical} and Eq.~\eqref{eq:thm-main-2}, and the third inequality is due to the monotonicity of the function $g(x) = \sqrt{x d \ln (T_{\zeta}/d) \ln ((T \ln^4 T \ln^2 (1/\delta))/(d\delta^2) \cdot (d \ln (T_{\zeta}/d) / x))}$ for large enough $x$. Combining Eq.~\eqref{eq:thm-main-3}, and Eq.~\eqref{eq:thm-main-4}, we have
\begin{align}
\sum_{t \in \calT_{\zeta}} \alpha_{\zeta, t}^{x_t} \omega_{\zeta, t}^{x_t} 
&\lesssim \sqrt{d T_{\zeta} \ln (T_{\zeta}/d) \ln (T \ln^5 T / (T_{\zeta} \delta^3))}\nonumber \\
& \qquad
+ T_{\zeta} \delta \sqrt{d / (T\ln^4 T \ln^2 (1/\delta))} . \label{eq:thm-main-5}
\end{align}

By Algorithm \ref{alg:suplinucb}, we know that 
$\varpi_{\zeta,t}^{x_t}
=\sqrt{d} \cdot \alpha_{\zeta,t}^{x_t}\omega_{\zeta,t}^{x_t} 
\geq 2^{1-\zeta}$ 
for all $t\in\mathcal T_{\zeta}$.
Subsequently, 
\begin{align*}
(2^{-\zeta-1})^2\cdot T_\zeta 
&\leq \sum_{t\in\mathcal T_{\zeta}}(\varpi_{\zeta,t}^{x_t})^2 
\leq \sqrt{d} \cdot \max_{t\in\mathcal T_{\zeta}}(\alpha_{\zeta,t}^{x_t})^2\cdot 
\sum_{t\in\mathcal T_{\zeta}}(\omega_{\zeta,t}^{x_t})^2\\
& \lesssim \sqrt{d} \cdot \log(T \ln ^4 T  \ln^2 (1/\delta) /(d\delta^2)) \cdot d\log T,
\end{align*}
where the last inequality holds by applying Lemma \ref{lem:elliptical}.
Therefore, 
\begin{align}\label{eq:bound-on-size}
T_\zeta \lesssim 
4^\zeta \cdot d^{3/2} \log T \log (T/\delta). 
\end{align}

We first divide the resolution levels $\zeta\in\{0,1,\cdots,\zeta_0\}$ into two different sets:
$\mathcal Z_1 := \{0,1,\cdots,\zeta^*\}$ and  $\mathcal Z_2 := \{\zeta^*<\zeta\leq\zeta_0\}$, where $\zeta^*$ is an integer to be defined later.
Clearly $\mathcal Z_1$ and $\mathcal Z_2$ partition $\{0,\cdots,\zeta_0\}$.
Note that $\sqrt{d} \cdot \sum_{t\in\mathcal T_{\zeta}} \alpha_{\zeta,t}^{x_t}\omega_{\zeta,t}^{x_t}\lesssim 2^{-\zeta}T_\zeta$ 
because $\varpi_{\zeta,t}^{x_t}\leq 2^{1-\zeta}$ for all $t\in\mathcal T_{\zeta}$.
\begin{align}
\sqrt{d}  \sum_{\zeta\in\mathcal Z_1} \sum_{t\in\mathcal T_\zeta}
&\alpha_{\zeta,t}^{x_t}\omega_{\zeta,t}^{x_t} 
\lesssim \sum_{\zeta=0}^{\zeta^*}2^{-\zeta}
\cdot 4^{\zeta} \cdot d^{3/2} \log T  \log (T/\delta)\nonumber \\
& \leq 2^{\zeta^*+1}\cdot d^{3/2} \log T \log (T/\delta);\label{eq:zeta-case1}
\end{align}
\begin{align}
&\sqrt{d}  \sum_{\zeta\in\mathcal Z_2}\sum_{t\in\mathcal T_{\zeta}}\alpha_{\zeta,t}^{x_t}\omega_{\zeta,t}^{x_t} \nonumber \\
& \lesssim d\sum_{\zeta \in \mathcal Z_2 }  \sqrt{T_\zeta \log(T)\log(T \log^5 T/(T_\zeta \delta^3)) } 
+ \delta d\sqrt{T} / \log^2 T\ \nonumber\\
&\leq d\sqrt{\left|\mathcal Z_2\right| \left(\sum_{\zeta \in \mathcal Z_2} T_\zeta\right) \log  (T) \log \left(T \log^5 T \cdot \frac{\left|\mathcal Z_2\right|}{ \delta^3 \sum_{\zeta \in \mathcal Z_2} T_\zeta}\right)} \nonumber \\
& \qquad
+ \delta d\sqrt{T} / \log^2 T\nonumber \\
&\lesssim  d\sqrt{\left|\mathcal Z_2\right| T \log  (T) \log \left(\log^5 T \left|\mathcal Z_2\right| /\delta^3 \right)} 
+ \delta d\sqrt{T} / \log^2 T,
\label{eq:zeta-case2}
\end{align}
where the inequality above Eq.~\eqref{eq:zeta-case2} is because of the concavity of the function $\sqrt{x \ln (T \log^5 T|\mathcal Z_2| / (x \delta^3))}$ and Jensen's inequality, and Eq.~\eqref{eq:zeta-case2}  is due to $\sum_{\zeta \in \mathcal Z_2} T_\zeta \leq T$ and the monotonicity of the function $\sqrt{x \ln (T \log^5 T |\mathcal Z_2|/(x\delta^3))}$. 

Recall that $ \sqrt{T/d} / \delta \leq 2^{\zeta_0}\leq 2  \sqrt{T/d} / \delta$. 
Select $\zeta^* = \zeta_0 - \lfloor \log_2 (\ln(T) \ln(T/\delta) /\delta)\rfloor$; 
we have that $|\mathcal Z_2| = O(\log \log (T/\delta) + \log (1/\delta))$ and 
$2^{\zeta^*} \leq 2  \sqrt{T}/(\sqrt{d}\ln(T) \ln(T/\delta))$.

Finally, we combine Eq.~\eqref{eq:thm-main-1}, Eq.~\eqref{eq:zeta-case1}, and Eq.~\eqref{eq:zeta-case2}, and have that
\begin{align*}
\sum_t \varpi_{\zeta_t, t}^{x_t} & \lesssim \delta d \sqrt{T} + d \sqrt{ T \log T \log (1/\delta)} \cdot \log \log (T/\delta) \nonumber\\
& \lesssim  d \sqrt{ T \log T \log (1/\delta)} \cdot \log \log (T/\delta) ,
\end{align*}
which is to be demonstrated.
\end{proof}

\end{document}